\documentclass[10pt,twocolumn]{article}

\usepackage[margin=1in]{geometry}
\usepackage{times}
\usepackage{graphicx}
\usepackage{booktabs}
\usepackage{amsmath}
\usepackage{amssymb}
\usepackage{amsthm}
\usepackage{hyperref}
\usepackage{xcolor}
\usepackage{multirow}
\usepackage{siunitx}
\usepackage{caption}
\usepackage{subcaption}
\usepackage[numbers,sort&compress]{natbib}

\hypersetup{colorlinks=true, linkcolor=blue, citecolor=blue, urlcolor=blue}

\newtheorem{theorem}{Theorem}

\newcommand{\hn}{\textsc{HN}}
\newcommand{\hardnet}{HardNet}
\newcommand{\ubc}{UBC}

\title{Exact and Deterministic Patch Descriptor Retrieval\\
       via Hierarchical Normalization}

\author{Koichi Sato\thanks{%
  The non-uniform per-dimension importance structure underlying this work
  was first disclosed in US Patent~11{,}797{,}603~B2
  (filed Jun.~24, 2020)~\cite{sato2020patent}.}\\
\texttt{koichisato@utexas.edu}}

\date{}
\begin{document}
\maketitle

% ── Abstract ──────────────────────────────────────────────────────────────────
\begin{abstract}
We present a patch descriptor retrieval method that returns the
\emph{exact} nearest neighbour---provably identical to exhaustive
full-vector search---while evaluating only a small fraction of the database,
and does so \emph{deterministically}: the same (database, query) pair always
produces the same result, independent of run order, thread count, or hardware;
the output depends on database ordering only in the negligible event of exact
inner-product ties (see Theorem~\ref{thm:det}).
This sets our method apart from approximate nearest-neighbour (ANN) approaches
such as HNSW and IVF-PQ, and from Matryoshka-style pipelines built on such
indexes, which trade exactness for speed and may return different results
across runs.
The enabling mechanism is \emph{Hierarchical Normalization} (\hn{}): a
normalisation scheme that replaces the standard L2 normalisation layer and
splits the pre-normalisation feature vector into a $K$-dim \emph{major}
component with norm $\sqrt{1{-}\alpha}$ and a $(128{-}K)$-dim \emph{minor}
component with norm $\sqrt{\alpha}$.
Because the minor inner product is bounded by $\alpha$ (Cauchy--Schwarz on
the prescribed norms), the major similarity plus $\alpha$ serves as an
\emph{admissible upper bound} on the full similarity:
the search scans the $K$-dim major component for all entries,
then performs full $128$-dim evaluation only for the small fraction
that cannot be pruned---a branch-and-bound scan that is provably exact.
We train \hn{}-modified \hardnet{} on the notredame split of the \ubc{}
patch dataset and evaluate on trevi and halfdome.
With a cache-optimised Structure-of-Arrays (\textsc{SoA}) layout and
$K{=}8$, $\alpha{=}1/32$, the search achieves $\mathbf{13.7\times}$
(trevi) / $\mathbf{12.7\times}$ (halfdome) wall-clock speed-up over
brute-force 128-dim search,
with only $0.4\%$ of entries requiring full 128-dim evaluation.
At $K{=}16$, $\alpha{=}1/8$---the recommended operating point---FPR@95
rises only from $0.0062$ (pretrained baseline) to $0.0064$ on trevi at
$7.2\times$ speed-up, with $98.8\%$ of entries bypassing full evaluation.
\end{abstract}

% ── 1. Introduction ───────────────────────────────────────────────────────────
\section{Introduction}
\label{sec:intro}

Dense nearest-neighbour search over high-dimensional embedding vectors
has become central to modern AI: retrieval-augmented generation (RAG) pipelines
for large language models retrieve relevant context by dot-product similarity
over billions of text and image embeddings; image authentication systems
(e.g.\ FaceNet-style face verification) compare identity embeddings by L2
distance; and visual place recognition localises a scene by matching query
embeddings against a geotagged database.
In all these settings, returning an \emph{exact} and \emph{reproducible} answer
is increasingly important alongside raw speed.
Although we evaluate on local patch descriptors, the \hn{} normalisation is
agnostic to the embedding type and is directly applicable wherever embeddings
are compared by inner product.

Large-scale patch matching requires searching millions of descriptors.
With $D$-dimensional vectors (128 for \hardnet{}) this costs $O(DN)$ per query,
so production systems often
turn to approximate nearest-neighbour (ANN) indexes such as HNSW
\cite{malkov2018efficient} and IVF-PQ \cite{jegou2011product}.
ANN buys sub-linear---or at minimum cache-efficient---search at two costs
that are easy to overlook:
(i)~it is \emph{approximate}---the returned item may not be the true nearest
neighbour, and
(ii)~it is \emph{non-deterministic}---results depend on graph construction
order, entry-point selection, thread count, and hardware platform.
For A/B testing, result caching, regression tracking, and safety auditing,
returning a \emph{different} or \emph{wrong} top result across runs is a
real liability.

\textbf{Our key insight} is that a descriptor can be \emph{trained} so that
exact search is cheap.
By constraining the normalisation, we concentrate $1{-}\alpha$ of each
descriptor's energy into a compact $K$-dim \emph{major} prefix and cap
the remaining minor contribution at $\alpha$.
This gives an admissible upper bound: $\mathrm{sim}_{\mathrm{full}}
\le \mathrm{dot}_K + \alpha$.
A branch-and-bound scan over the major prefix then returns the \emph{exact}
nearest neighbour while fully evaluating only the small fraction of entries
that could plausibly beat the running best.

\textbf{Relation to prior work on non-uniform prefix embeddings.}
The idea of concentrating discriminative information into a compact
\emph{prefix} of an embedding vector---so that the prefix alone can serve as
a coarse index---was first disclosed in a patent filed in June
2020~\cite{sato2020patent}, predating subsequent work on elastic or
multi-granularity embeddings.
Matryoshka Representation Learning (MRL)~\cite{kusupati2022matryoshka}
independently popularised a related training objective for general-purpose
representations (NeurIPS 2022).
The present paper reproduces and extends the retrieval algorithm described in
\cite{sato2020patent}, now with formal proofs of exactness and determinism,
and evaluated on the public \ubc{} patch benchmark.

\textbf{Contributions.}
\begin{enumerate}
  \item \textbf{Exact retrieval (Theorem~\ref{thm:exact}).}
        We prove that \hn{}-based two-phase search returns precisely the
        nearest neighbour of exhaustive full-vector search; the speed-up
        incurs \emph{zero} retrieval error.
  \item \textbf{Determinism (Theorem~\ref{thm:det}).}
        \hn{} search is reproducible across runs, index builds, and thread
        counts; given a fixed database ordering it is fully deterministic,
        and in practice ordering-independent (exact inner-product ties are
        negligible with floating-point descriptors)---in contrast to
        HNSW/IVF-PQ and MRL pipelines that rely on them.
  \item \textbf{Efficiency.}
        A cache-optimised SoA layout with $K{=}8$, $\alpha{=}1/32$ achieves
        $13.7\times$ (trevi) / $12.7\times$ (halfdome) wall-clock speed-up on CPU
        with $500{,}000$ database entries, with only $0.4\%$ of entries requiring full 128-dim evaluation
        (see Table~\ref{tab:speed} for the full quality--speed Pareto frontier).
        At $K{=}16$, $\alpha{=}1/8$ the FPR@95 cost is only $0.0002$ on trevi
        at $7.2\times$ speed-up.
\end{enumerate}

% ── 2. Related Work ───────────────────────────────────────────────────────────
\section{Related Work}
\label{sec:related}

\paragraph{Local feature descriptors.}
Hand-crafted descriptors (SIFT~\cite{lowe2004sift}, ORB~\cite{rublee2011orb})
have largely been supplanted by learned descriptors~\cite{mishchuk2017hardnet}.
\hardnet{}~\cite{mishchuk2017hardnet} trains a 128-dim descriptor with
hard-negative triplet loss, setting the state of the art on the \ubc{}
patch benchmark.

\paragraph{Approximate nearest-neighbour search.}
FAISS~\cite{johnson2019billion}, HNSW~\cite{malkov2018efficient}, and
IVF-PQ~\cite{jegou2011product} accelerate retrieval via quantisation
and graph indexes.
ScaNN~\cite{guo2020scann} improves inner-product quantisation via anisotropic
scoring; DiskANN~\cite{jayaram2019diskann} scales graph indexes to disk for
billion-scale retrieval.
Binary hashing methods (LSH~\cite{andoni2015lsh}) achieve sub-linear search
via compact codes at the cost of recall guarantees.
All of these methods are approximate or non-deterministic by construction.
Our method is orthogonal to these approaches; building a sub-linear index
on the $K$-dim major for Phase-1 is theoretically possible, though whether
it helps with \hn{}-trained descriptors in practice remains an open question
(see Section~\ref{sec:discussion}).

\paragraph{Non-uniform prefix embeddings.}
The concept of assigning non-uniform importance to dimensions and using a
compact prefix for coarse-to-fine retrieval was first disclosed in a patent
filed June~24, 2020~\cite{sato2020patent}.
Matryoshka Representation Learning~\cite{kusupati2022matryoshka} independently
introduced a training scheme that makes representations elastic in size
(NeurIPS~2022), enabling nested sub-vectors to be used as approximate
indexes at varying granularities.
Unlike \hn{}, MRL makes no admissibility guarantee: MRL retrieval pipelines
rely on ANN indexes for efficiency and therefore inherit their approximate,
non-deterministic behaviour.

\paragraph{Cascaded/hierarchical retrieval.}
Cascaded hashing~\cite{liu2012supervised} and multi-resolution methods
reduce computation by progressively refining candidate sets.
\hn{} achieves a similar effect with a single normalisation and a
mathematically grounded, \emph{provably exact} pruning criterion.

% ── 3. Method ─────────────────────────────────────────────────────────────────
\section{Hierarchical Normalization}
\label{sec:method}

\subsection{Formulation}

\hn{} replaces the standard L2 normalisation layer of the descriptor network.
Let $\mathbf{f} \in \mathbb{R}^D$ be the pre-normalisation feature vector.
We split $\mathbf{f}$ into a \emph{major} prefix of $K$ dimensions and a
\emph{minor} suffix of $D - K$ dimensions, normalising each part to prescribed
norms:

\begin{equation}
\mathbf{f}_{\mathrm{major}} = \sqrt{1-\alpha}\,\frac{\mathbf{f}_{[:K]}}{\|\mathbf{f}_{[:K]}\|},
\qquad
\mathbf{f}_{\mathrm{minor}} = \sqrt{\alpha}\,\frac{\mathbf{f}_{[K:]}}{\|\mathbf{f}_{[K:]}\|},
\label{eq:hn}
\end{equation}

so that $\|\mathbf{f}_{\mathrm{major}}\|^2 = 1{-}\alpha$,
$\|\mathbf{f}_{\mathrm{minor}}\|^2 = \alpha$, and the full vector
$[\mathbf{f}_{\mathrm{major}},\mathbf{f}_{\mathrm{minor}}]$ has unit norm.
The scalar $\alpha \in [0,1)$ controls the energy split.

\textbf{Why fine-tuning is necessary.}
\hardnet{} is trained to maximise discriminability via hard-negative triplet
loss; as a result, energy tends to spread uniformly across all 128 dimensions.
Replacing the L2 normalisation with \hn{} post-hoc---without any further
training---does not concentrate energy in the major subspace.
The major component then represents only an arbitrary $K$-dimensional slice
of a uniformly distributed descriptor, so Phase-1 scores are weaker
predictors of the full inner product and fewer database entries are pruned.
Concretely, at $K{=}16$, $\alpha{=}1/8$ on $N{=}99{,}000$ UBC patches,
applying \hn{} post-hoc to pretrained \hardnet{} gives Phase-2\%~$= 9.2\%$
on trevi; fine-tuning reduces this to $1.1\%$---an $8{\times}$ improvement
that directly translates to higher wall-clock speed-up.
Fine-tuning shifts energy toward the major component, making the first $K$
dimensions carry the bulk of the discriminative signal and Phase-1 scores
reliable predictors of the full ranking.

\subsection{Admissible Bound}
\label{sec:bound}

For query $\mathbf{q}$ and database entry $\mathbf{x}$ (both \hn{}-normalised
network outputs), the inner product decomposes naturally as

\begin{equation}
  \langle \mathbf{q}, \mathbf{x} \rangle
  = \underbrace{\langle \mathbf{q}_{\mathrm{major}},
    \mathbf{x}_{\mathrm{major}}\rangle}_{\mathrm{dot}_{K}(\mathbf{q},\mathbf{x})}
  + \underbrace{\langle \mathbf{q}_{\mathrm{minor}},
    \mathbf{x}_{\mathrm{minor}}\rangle}_{\le\,\alpha}.
  \label{eq:decomp}
\end{equation}

By Cauchy--Schwarz,
$\langle\mathbf{q}_{\mathrm{minor}},\mathbf{x}_{\mathrm{minor}}\rangle
\le \|\mathbf{q}_{\mathrm{minor}}\|\|\mathbf{x}_{\mathrm{minor}}\| = \alpha$,
so we obtain the \textbf{admissible upper bound}:

\begin{equation}
  \boxed{
    \langle \mathbf{q}, \mathbf{x} \rangle
    \;\le\;
    \mathrm{dot}_{K}(\mathbf{q}, \mathbf{x}) + \alpha.
  }
  \label{eq:bound}
\end{equation}

The bound is tight when the minor components are identical; it is
independent of database size and holds for any $\alpha \in [0,1)$.

\subsection{Exact and Deterministic Two-Phase Search}
\label{sec:twophase}

\begin{figure}[t]
\begin{minipage}{\linewidth}
\begin{small}
\textbf{Algorithm 1} Two-phase exact search
\hrule\vspace{4pt}
\textbf{Input:} query $\mathbf{q}$, database $\{\mathbf{x}_i\}$,
  energy-split parameter $\alpha$ (serves as admissible bound)\\
\textbf{Output:} index of the exact nearest neighbour
\begin{enumerate}
  \item \textbf{Phase 1 (major scan, store scores):}
        Compute and store $s_i = \mathrm{dot}_{K}(\mathbf{q}, \mathbf{x}_i)$
        for all $i$. Let $b = \arg\max_i s_i$, breaking ties by index.
  \item Set $s_{\mathrm{best}} =
        \langle\mathbf{q}, \mathbf{x}_b\rangle$ (one full dot using stored $s_b$);
        set $\mathrm{best\_idx} \leftarrow b$.
  \item \textbf{Phase 2 (admissible-bound pruning):}
        For each $i \ne b$ in index order:\\
        \quad If $s_i + \alpha \le s_{\mathrm{best}}$: \textbf{skip}
              (entry $i$ provably cannot exceed $s_{\mathrm{best}}$).\\
        \quad Else: compute $c = s_i + \langle\mathbf{q}_{\mathrm{minor}}, \mathbf{x}_{i,\mathrm{minor}}\rangle$.\\
        \quad\quad If $c > s_{\mathrm{best}}$: $s_{\mathrm{best}} \leftarrow c$,
              $\mathrm{best\_idx} \leftarrow i$.
  \item Return $\mathrm{best\_idx}$.
\end{enumerate}
\hrule
\end{small}
\end{minipage}
\label{alg:hier}
\end{figure}

\begin{theorem}[Exactness]
\label{thm:exact}
For any database and query using descriptors satisfying Eq.~\eqref{eq:hn},
Algorithm~1 returns the same index as exhaustive full-vector search.
\end{theorem}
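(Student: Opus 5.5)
The plan is to prove an invariant for Phase~2 and then conclude by a contradiction argument about the true maximiser. Write $S_i = \langle \mathbf{q}, \mathbf{x}_i\rangle$ for the full similarity and $S^\star = \max_i S_i$. Exhaustive search returns some $i^\star$ with $S_{i^\star} = S^\star$.

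\textbf{Step 1: the decomposition is exact.} First I will check that every quantity Algorithm~1 stores is an exact full score. By Eq.~\eqref{eq:decomp}, $S_i = s_i + \langle\mathbf{q}_{\mathrm{minor}},\mathbf{x}_{i,\mathrm{minor}}\rangle$. So the value computed in Step~2 equals $S_b$, and every $c$ computed in Phase~2 equals $S_i$. This gives the invariant: at every moment, $s_{\mathrm{best}} = S_{\mathrm{best\_idx}}$.

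\textbf{Step 2: monotonicity and the skip rule.} Second, $s_{\mathrm{best}}$ is non-decreasing during Phase~2, since it is only overwritten by a strictly larger $c$. Consider an entry $i$ skipped at some moment. Then the admissible bound Eq.~\eqref{eq:bound} gives
\[
S_i \le s_i + \alpha \le s_{\mathrm{best}}^{(\text{then})} \le s_{\mathrm{best}}^{(\text{final})}.
\]
Now consider an entry $i$ that is fully evaluated. Either it becomes the best, or $S_i \le s_{\mathrm{best}}$ at that moment, and again monotonicity carries this to the final value. The entry $b$ itself satisfies $S_b \le s_{\mathrm{best}}^{(\text{final})}$ trivially. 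Hence every $i$ satisfies $S_i \le s_{\mathrm{best}}^{(\text{final})} = S_{\mathrm{best\_idx}}$. So the returned index attains $S^\star$, i.e.\ it is a nearest neighbour of exhaustive full-vector search. This is where Cauchy--Schwarz enters: the skip is safe precisely because $\langle\mathbf{q}_{\mathrm{minor}},\mathbf{x}_{\mathrm{minor}}\rangle\le\alpha$ under the norms prescribed by Eq.~\eqref{eq:hn}.

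\textbf{Step 3: matching the index, not just the score (main obstacle).} The statement claims the \emph{same index}, so the argument must also address ties in $S_i$. When the maximiser is unique, Step~2 already forces $\mathrm{best\_idx}=i^\star$, and I expect to present that as the main case. If several indices attain $S^\star$, the output depends on conventions. The initial $b$ is chosen by $\arg\max s_i$, and Phase~2 uses a strict ``$>$''. The algorithm therefore returns $b$ if $b$ is a full-score maximiser, and otherwise the first maximiser encountered in index order. This need not coincide with the smallest-index maximiser that an exhaustive scan would return.

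I will therefore state precisely what is proved. The returned index always attains the exhaustive maximum. It coincides with the exhaustive $\arg\max$ whenever that maximum is unique, which is the generic situation with floating-point descriptors; the tie case is deferred to the discussion accompanying Theorem~\ref{thm:det}. I will also note the implicit floating-point assumption: the bound Eq.~\eqref{eq:bound} must hold for the computed quantities. This assumption could be removed by inflating $\alpha$ by a rounding tolerance, but I would treat it as exact arithmetic in the proof.
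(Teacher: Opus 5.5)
Your argument is correct and is essentially the paper's proof: each skipped entry is bounded via Eq.~\eqref{eq:bound} by the running $s_{\mathrm{best}}$, and $s_{\mathrm{best}}$ is the maximum full score over all evaluated entries, so the returned index attains the global maximum. Your explicit monotonicity step and your tie analysis are sound refinements that the paper's proof glosses over. The paper's proof likewise establishes only attainment of the maximum, so the literal ``same index'' claim holds when the maximiser is unique (or when exhaustive search uses the same tie convention), a point the paper defers to the remark after Theorem~\ref{thm:det}.
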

\begin{proof}
For any entry $i$ skipped in Phase~2:
the skip condition gives $s_i + \alpha \le s_{\mathrm{best}}$.
By Eq.~\eqref{eq:bound},
$\langle\mathbf{q},\mathbf{x}_i\rangle \le s_i + \alpha \le s_{\mathrm{best}}$,
so entry $i$ provably cannot exceed the current best.
Entry $b$ is fully evaluated first, and $s_{\mathrm{best}}$ is updated for every
non-skipped entry; hence it equals the maximum inner product over all
fully evaluated entries.
Since every skipped entry is known to lie at or below $s_{\mathrm{best}}$
without computing its full inner product, $s_{\mathrm{best}}$ attains the global
maximum; the returned index is therefore the global nearest neighbour.
\end{proof}

\begin{theorem}[Determinism]
\label{thm:det}
For a fixed database ordering and tie-breaking rule, Algorithm~1 produces a
unique output for each query, reproducible across runs on the same platform.
\end{theorem}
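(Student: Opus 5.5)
The plan is to show that Algorithm~1 is a deterministic function of three things: the query, the ordered database, and $\alpha$. Once we know it contains no randomness, no data-dependent scheduling, and no operation whose result depends on evaluation order, uniqueness and reproducibility follow directly. I will go through the algorithm step by step and check that each step's output is fully determined by its inputs and by the outputs of the earlier steps. The argument is then an induction over the sequential control flow.

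\textbf{Step 1 (Phase~1).} Each $s_i=\mathrm{dot}_K(\mathbf{q},\mathbf{x}_i)$ is a fixed arithmetic expression in the entries of $\mathbf{q}$ and $\mathbf{x}_i$. I will fix the summation order of the $K$ products as part of the algorithm, for example left to right or a fixed SIMD reduction tree. With that order fixed, IEEE-754 arithmetic on a given platform returns a bit-identical $s_i$ on every run. This holds even if different $i$ are assigned to different threads, because each $s_i$ is computed independently and no reduction is performed across entries. The index $b=\arg\max_i s_i$ with ties broken by index is then a function of the vector $(s_i)$ alone. If it is computed in parallel, it must be merged with the lexicographic key $(s_i,-i)$. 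That merge is associative and commutative, so the result does not depend on how work is partitioned.

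\textbf{Step 2 (Phase~2).} The loop visits $i\neq b$ in the fixed index order. The state $(s_{\mathrm{best}},\mathrm{best\_idx})$ after processing entry $i$ is a deterministic function of the previous state and of $s_i$ and $\mathbf{x}_i$. The skip test compares deterministic floats, and the minor dot product again uses a fixed summation order. The update rule uses the strict inequality $c>s_{\mathrm{best}}$, so ties keep the earlier index. Induction on the loop counter then shows that the final state, and hence the returned index, is unique.

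\textbf{Main obstacle.} The delicate point is floating-point non-associativity. If a parallel or vectorised implementation changed the reduction order of a dot product depending on thread count, the computed scores could differ in the last bit, and the result of a comparison could flip. I would handle this by stating the fixed-reduction-order convention explicitly as part of ``the same platform''. As an alternative argument, Theorem~\ref{thm:exact} shows that the output is the exhaustive-search argmax. That argmax is unique whenever there are no exact ties in the full inner product, so the output is also independent of ordering in that generic case.
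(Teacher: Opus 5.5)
Your proof is correct and takes the same route as the paper: every step (the Phase-1 scores, the index-tie-broken $\arg\max$, and the sequential Phase-2 bound test and update) is a randomness-free function of its inputs, so the returned index is determined by the database, its ordering, and the query. Your fixed-reduction-order convention and the associative lexicographic merge for a parallel $\arg\max$ spell out what the paper compresses into ``deterministic matrix--vector product'' and ``same platform''. One caution: your closing appeal to Theorem~\ref{thm:exact} addresses only ordering independence (the paper's Remark), not the floating-point issue, because exactness is a real-arithmetic statement and, under run-varying rounding, entries whose scores differ only by rounding error could also swap, not just exact ties.
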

\begin{proof}
$\mathrm{dot}_K(\mathbf{q},\mathbf{x}_i)$ is a deterministic
matrix--vector product; ties are resolved by index, giving a total order.
The Phase-2 traversal, bound test, and maximisation use no randomness.
The output is a deterministic function of (database, query, database ordering).
\end{proof}

\noindent\textbf{Remark (ordering dependence in practice).}
The dependence on database ordering in Theorem~\ref{thm:det} arises
\emph{only} when two entries share an identical full inner product with the
query---an exact numerical tie resolved by index.
With 32-bit floating-point \hn{}-normalised descriptors optimised by gradient
descent, such ties occur with negligible probability, so the output is
effectively independent of database ordering for virtually all
(database, query) pairs.
In contrast, with \emph{quantised} descriptors (e.g.\ 8-bit integer or binary
codes), the discrete value space makes exact ties considerably more likely;
in those settings, database ordering becomes a meaningful part of the
determinism guarantee and should be explicitly fixed and documented.

\noindent\textbf{Note on Phase-2 scan order.}
Algorithm~1 traverses entries in \emph{index order} rather than sorted by
descending $s_i$.
A sorted scan would enable early \emph{loop} termination (once
$s_i + \alpha < s_{\mathrm{best}}$, all remaining entries are prunable), but
sorting adds $O(N\log N)$ overhead.
In the SoA layout the score buffer (2\,MB for $K{=}16$, $N{=}500{,}000$)
fits entirely in the $12$\,MB L3 cache, so the $N$ sequential score reads in
Phase-2 are cheap; the index-order design avoids sorting while incurring
negligible memory bandwidth cost.

Together, Theorems~\ref{thm:exact} and~\ref{thm:det} separate \hn{} from
ANN methods (HNSW, IVF-PQ) and from MRL pipelines built on such indexes:
those trade exactness for speed and produce run-varying results, whereas
\hn{} reproduces the exhaustive-search result every time.

% ── 4. Experimental Setup ─────────────────────────────────────────────────────
\section{Experimental Setup}
\label{sec:setup}

\paragraph{Backbone.}
\hardnet{}~\cite{mishchuk2017hardnet} with publicly available pre-trained
weights, modified by replacing the final L2 normalisation with
$\mathrm{HN}_\alpha$.

\paragraph{Training.}
All models are initialised from the pretrained \hardnet{} weights and
fine-tuned on the notredame split of the \ubc{} patch
dataset~\cite{brown2011discriminative} with TripletMarginLoss (margin~$1.0$),
Adam ($\mathrm{lr}{=}10^{-4}$), batch~size~32 (empirically chosen for the
fine-tuning regime; larger batches yielded no improvement).
We report two settings: $K{=}16$ (30~epochs) and $K{=}8$ (10~epochs);
epoch counts were determined by monitoring the training loss on a held-out
validation subset and stopping when the loss plateaued.
Because \hn{} fine-tuning only needs to shift energy into the major
subspace---not relearn full discriminability---and because \hardnet{}'s
original training data is no longer readily accessible (retraining on
substitute data degrades quality), a short fine-tuning schedule suffices
for both settings.

\paragraph{Dataset.}
The \ubc{} patch dataset~\cite{brown2011discriminative} provides three
independent patch collections: notredame (train), trevi and halfdome (test).
Pre-extracted patches are used directly, without a keypoint detector front-end.
Cross-scene evaluation measures generalisation under domain shift.

\paragraph{Metric.}
FPR@95: false positive rate at $95\%$ true positive rate.
Lower is better.

% ── 5. Results ────────────────────────────────────────────────────────────────
\section{Experiments}
\label{sec:experiments}

\subsection{Descriptor Quality}

\begin{table}[t]
\centering
\caption{FPR@95 on trevi and halfdome.
  \hn{} models: pretrained \hardnet{} weights fine-tuned on notredame with \hn{} normalisation.
  Lower is better.
  $\dagger$: pretrained baseline, no \hn{}.
  Best \hn{} result within each $K$ setting in \textbf{bold}.}
\label{tab:main}
\setlength{\tabcolsep}{4pt}
\begin{tabular}{llrr}
\toprule
Method & $\alpha$ & trevi & halfdome \\
\midrule
SIFT & --- & 0.0617 & 0.1415 \\
\hardnet{} (pretrained)$^\dagger$ & --- & 0.0062 & 0.0201 \\
\midrule
\hn{} $K{=}16$, $\alpha{=}0$    & 0     & 0.0109 & 0.0787 \\
\hn{} $K{=}16$, $\alpha{=}1/32$ & 1/32  & 0.0089 & 0.0653 \\
\hn{} $K{=}16$, $\alpha{=}1/16$ & 1/16  & 0.0078 & 0.0539 \\
\hn{} $K{=}16$, $\alpha{=}1/8$  & 1/8   & 0.0064 & 0.0462 \\
\hn{} $K{=}16$, $\alpha{=}1/4$  & 1/4   & \textbf{0.0056} & \textbf{0.0414} \\
\midrule
\hn{} $K{=}8$, $\alpha{=}0$    & 0     & 0.0447 & 0.2243 \\
\hn{} $K{=}8$, $\alpha{=}1/32$ & 1/32  & 0.0397 & 0.1837 \\
\hn{} $K{=}8$, $\alpha{=}1/16$ & 1/16  & 0.0339 & 0.1565 \\
\hn{} $K{=}8$, $\alpha{=}1/8$  & 1/8   & 0.0270 & 0.1186 \\
\hn{} $K{=}8$, $\alpha{=}1/4$  & 1/4   & \textbf{0.0164} & \textbf{0.0904} \\
\bottomrule
\end{tabular}
\end{table}

\paragraph{$K{=}16$.}
Quality improves monotonically with $\alpha$.
At $\alpha{=}0$ (pure major-only), FPR@95 is $0.0109$ on trevi.
At $\alpha{=}1/8$ the gap to the pretrained baseline is already negligible
($0.0002$), confirming that \hn{} normalisation preserves almost all
discriminative information at a substantial speed-up.
At $\alpha{=}1/4$, FPR@95 reaches $0.0056$---below the pretrained baseline
of $0.0062$---because \hn{} fine-tuning reshapes the descriptor geometry to
concentrate energy in the major subspace; this restructuring incidentally
improves cross-scene discriminability, consistent with known regularisation
effects of fine-tuning under constrained normalisation.
All configurations satisfy the exactness guarantee of Theorem~\ref{thm:exact}.
Figure~\ref{fig:roc16} shows the full ROC curves.

\begin{figure*}[htbp]
  \centering
  \includegraphics[width=\textwidth]{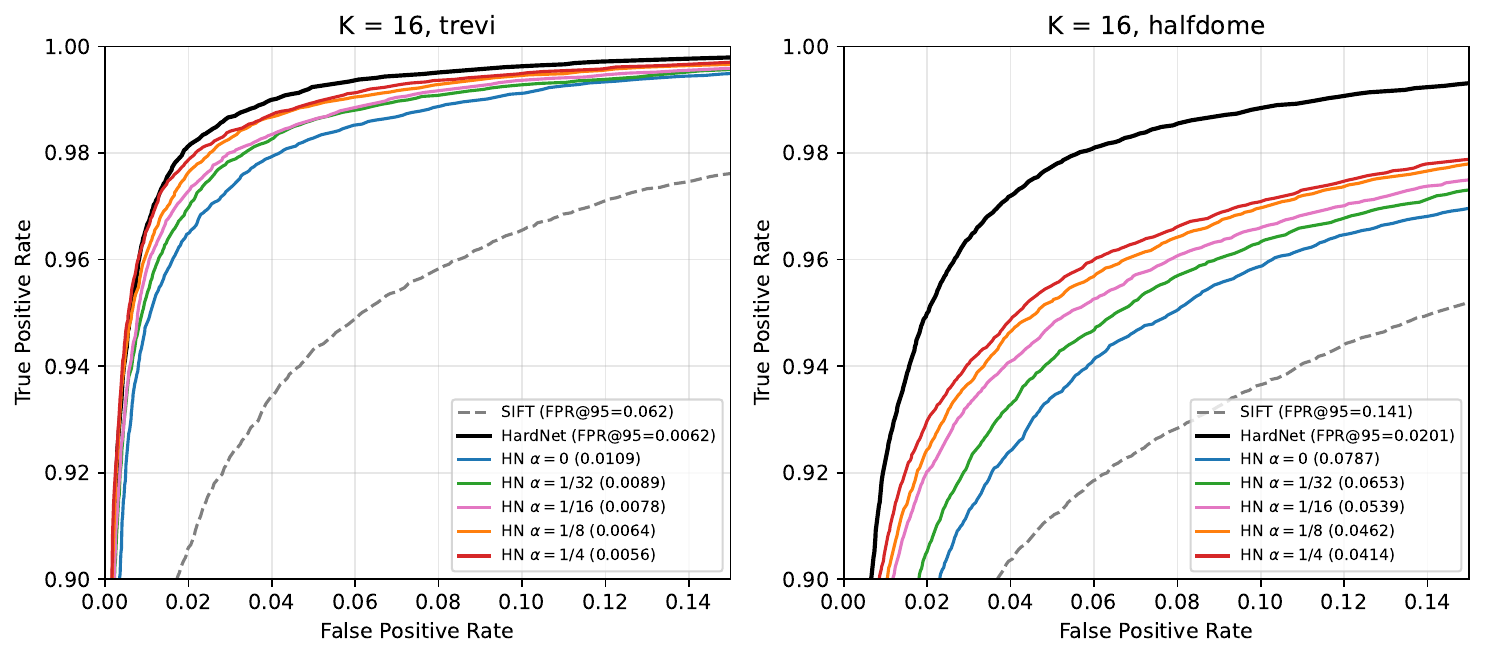}
  \caption{$K{=}16$ ROC curves on trevi (left) and halfdome (right) for all $\alpha$
           configurations vs.\ \hardnet{} baseline and SIFT. FPR@95 values in legend.}
  \label{fig:roc16}
\end{figure*}

\paragraph{$K{=}8$.}
Quality improves monotonically with $\alpha$, mirroring the $K{=}16$ trend.
At $\alpha{=}1/4$, FPR@95 reaches $0.0164$ on trevi and $0.0904$ on halfdome;
$\alpha{=}1/8$ gives $0.0270$ on trevi, already well below SIFT ($0.0617$).
The larger gap vs.\ the pretrained baseline reflects the greater challenge of
representing information with only an 8-dim major prefix.
Figure~\ref{fig:roc8} shows the full ROC curves.

\begin{figure*}[htbp]
  \centering
  \includegraphics[width=\textwidth]{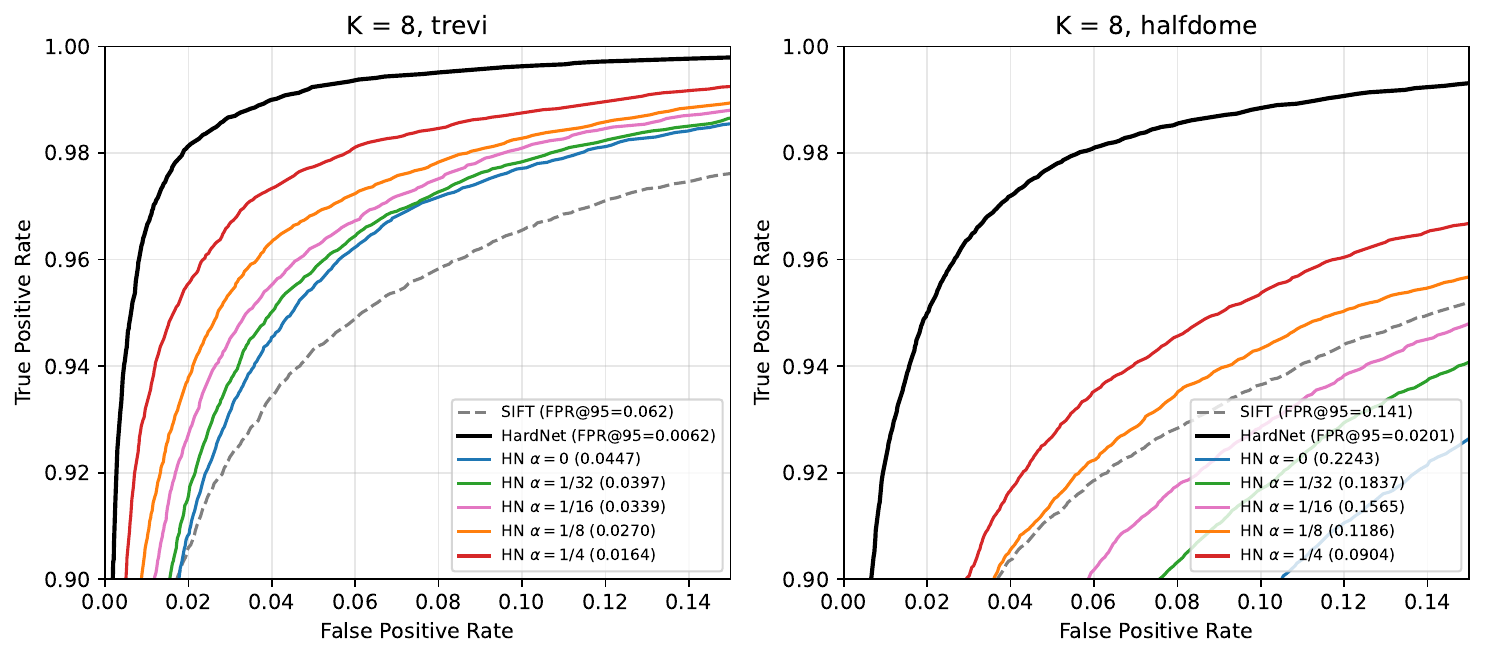}
  \caption{$K{=}8$ ROC curves on trevi (left) and halfdome (right) for all $\alpha$
           configurations vs.\ \hardnet{} baseline and SIFT.}
  \label{fig:roc8}
\end{figure*}

\paragraph{Effect of $\alpha$.}
Across both $K$ settings, larger $\alpha$ consistently reduces FPR: with
greater $\alpha$, more energy is allocated to the $(128{-}K)$-dim minor
component, spreading discriminative information across additional dimensions
and enabling a richer overall descriptor representation.
At $K{=}16$, the FPR improvement from $\alpha{=}1/32$ to $\alpha{=}1/4$ spans
only $0.0033$ on trevi; at $K{=}8$ the same range spans $0.0233$,
confirming gradual and stable improvement in both settings.
Figure~\ref{fig:fpralpha} shows this trend across both scenes.

\paragraph{Note on halfdome vs.\ trevi.}
FPR@95 is consistently higher on halfdome across all configurations.
This is not specific to \hn{}: the pretrained \hardnet{} baseline already
shows $0.0201$ on halfdome vs.\ $0.0062$ on trevi, reflecting greater
photometric and viewpoint variation in the halfdome scene.
The relative advantage of \hn{} configurations over SIFT is maintained on
both scenes.

\begin{figure*}[t]
  \centering
  \includegraphics[width=\textwidth]{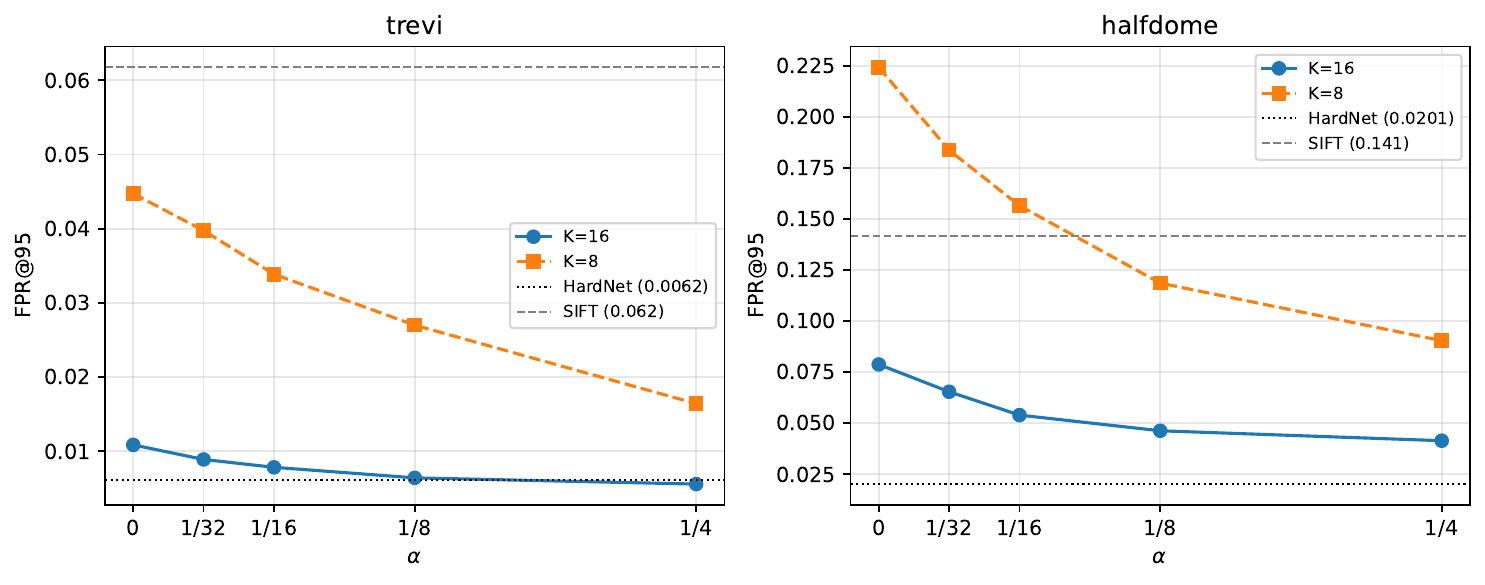}
  \caption{FPR@95 vs.\ energy-split parameter $\alpha$ for $K{=}16$ and $K{=}8$
           on trevi (left) and halfdome (right).
           Dashed lines: \hardnet{} pretrained and SIFT baselines.}
  \label{fig:fpralpha}
\end{figure*}

\subsection{Quality–Speed Trade-off}

\begin{table}[h]
\centering
\caption{Quality vs.\ speed at $\alpha{=}1/8$, $N{=}500{,}000$ entries
  (FPR@95 from Table~\ref{tab:main}; speed from Table~\ref{tab:speed}).
  \hn{} models fine-tuned on notredame with \hn{} normalisation.
  SpeedHier: vs.\ brute-force SoA 128-dim.}
\label{tab:kcomp}
\setlength{\tabcolsep}{2pt}
\begin{tabular}{lrrrr}
\toprule
Method & \multicolumn{2}{c}{trevi} & \multicolumn{2}{c}{halfdome} \\
\cmidrule(lr){2-3}\cmidrule(lr){4-5}
 & FPR@95 & Speed & FPR@95 & Speed \\
\midrule
\hardnet{} pretrained     & 0.0062 & $1.0\times$ & 0.0201 & $1.0\times$ \\
\hn{} $K{=}16$, $\alpha{=}1/8$ & 0.0064 & $7.2\times$ & 0.0462 & $5.8\times$ \\
\hn{} $K{=}8$,  $\alpha{=}1/8$ & 0.0270 & $10.1\times$ & 0.1186 & $5.4\times$ \\
\bottomrule
\end{tabular}
\end{table}

$K{=}16$ trades a small quality cost ($+0.0002$ FPR on trevi) for a $7.2\times$
speed-up with full exactness.
At $\alpha{=}1/8$, $K{=}8$ achieves higher throughput on trevi ($10.1\times$
vs.\ $7.2\times$) at a larger quality cost; on halfdome the elevated
Phase-2\% ($6.6\%$ vs.\ $2.8\%$) reverses this advantage ($5.4\times$
vs.\ $5.8\times$), making $K{=}8$ most useful when Phase-2\% stays low.
Figure~\ref{fig:pareto} visualises the full Pareto frontier.

\begin{figure*}[t]
  \centering
  \includegraphics[width=\textwidth]{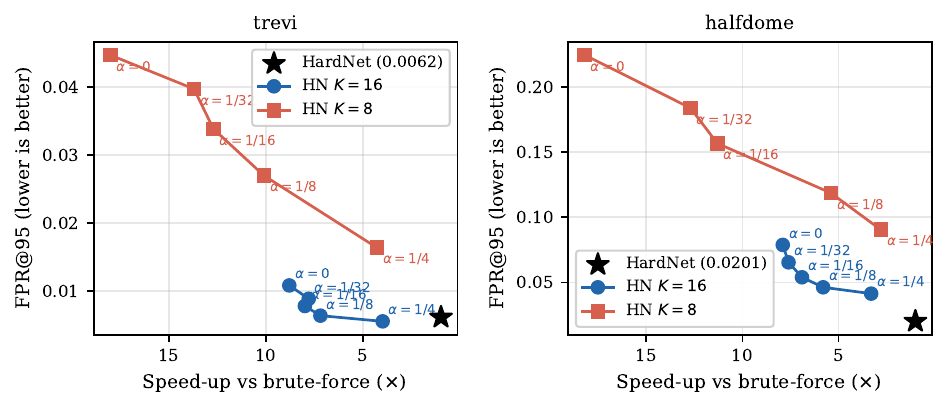}
  \caption{Quality--speed Pareto frontier on trevi (left) and halfdome (right).
           Each point is one $(K,\alpha)$ configuration; the star marks the
           \hardnet{} pretrained baseline at $1\times$ speed.
           Both axes: lower FPR@95 and higher speed-up are better.}
  \label{fig:pareto}
\end{figure*}

% ── 6. Speed Analysis ─────────────────────────────────────────────────────────
\section{Speed Analysis}
\label{sec:speed}

\subsection{Memory Layout: SoA}

In the conventional \emph{Array of Structures} (\textsc{AoS}) layout,
the $K$-dim major scan must load all 128 floats per entry ($512$~bytes),
wasting cache.
A \emph{Structure of Arrays} (\textsc{SoA}) layout stores majors and minors
in separate contiguous arrays.
For $K{=}8$ and $N{=}500{,}000$, the major array is $16$~MB---versus
$256$~MB for the full descriptor bank---reducing memory bandwidth by $16\times$.
For $K{=}16$, the major array is $32$~MB.

\subsection{Wall-clock Benchmark}

\begin{table}[t]
\centering
\caption{Speed benchmark (\hn{} fine-tuned on notredame),
  SoA layout, $N{=}500{,}000$, 200 test queries, 10 runs averaged (Intel Core Ultra~7~256V, 12\,MB L3;
  Windows~11, MSVC~19.50, \texttt{/O2}).
  SpeedHier: vs.\ brute-force SoA 128-dim.
  Phase2\%: fraction requiring full 128-dim evaluation.
  $\alpha{=}1/4$ rows measured in a separate session; all other rows from a single idle-PC session.}
\label{tab:speed}
\setlength{\tabcolsep}{3.5pt}
\begin{tabular}{llrrrr}
\toprule
$K$ & $\alpha$ & \multicolumn{2}{c}{trevi} & \multicolumn{2}{c}{halfdome} \\
\cmidrule(lr){3-4}\cmidrule(lr){5-6}
 & & SpeedHier & Phase2\% & SpeedHier & Phase2\% \\
\midrule
16 & 0     & $8.8\times$  & 0.0\% & $7.9\times$  & 0.0\% \\
16 & 1/32  & $7.8\times$  & 0.4\% & $7.6\times$  & 0.5\% \\
16 & 1/16  & $8.0\times$  & 0.6\% & $6.9\times$  & 1.1\% \\
16 & 1/8   & $7.2\times$  & 1.2\% & $5.8\times$  & 2.8\% \\
16 & 1/4   & $4.0\times$  & 5.5\% & $3.3\times$  & 6.5\% \\
\midrule
 8 & 0     & $18.0\times$ & 0.0\% & $18.2\times$ & 0.0\% \\
 8 & 1/32  & $13.7\times$ & 0.4\% & $12.7\times$ & 0.6\% \\
 8 & 1/16  & $12.7\times$ & 0.6\% & $11.3\times$ & 1.0\% \\
 8 & 1/8   & $10.1\times$ & 1.5\% & $5.4\times$  & 6.6\% \\
 8 & 1/4   & $4.3\times$  & 6.6\% & $2.8\times$  & 11.5\% \\
\bottomrule
\end{tabular}
\end{table}

At $K{=}8$, the 16\,MB major array is $16\times$ smaller than the full
descriptor bank (256\,MB), reducing memory pressure and achieving up to
$18.2\times$ speed-up ($18.0\times$ on trevi, $18.2\times$ on halfdome);
the 16\,MB buffer is close to the $12$\,MB L3 boundary, so partial caching
amplifies the bandwidth advantage beyond the nominal $16\times$ ratio.
At $K{=}16$, peak speed is $8.8\times$ (trevi, $\alpha{=}0$); the larger
32\,MB major array exceeds L3 and is fully bandwidth-bound, consistent with
the $8\times$ bandwidth ratio of 32\,MB vs.\ 256\,MB.
For $\alpha{\le}1/8$, Phase-2 rates stay below $2\%$ on trevi and below
$7\%$ on halfdome; at $\alpha{=}1/4$, Phase-2 rises to at most $6.6\%$ on
trevi and $11.5\%$ on halfdome ($K{=}8$), so the exactness overhead
(Theorem~\ref{thm:exact}) is negligible in practice.
A notable exception to the decreasing-speed-with-$\alpha$ trend is $K{=}8$,
$\alpha{=}0$: at $\alpha{=}0$, \hn{} normalisation sets all 120 minor components
to exactly zero, so Phase~2 never accesses the 240\,MB minor array at all.
Any $\alpha{>}0$ forces even a handful of Phase-2 entries (${\approx}2{,}000$
at $\alpha{=}1/32$) to issue random reads into the 240\,MB minor array,
causing L3 misses and a sharp drop from $18.0\times$ to $13.7\times$.
Minor fluctuations in Table~\ref{tab:speed} (e.g.\ $K{=}16$ trevi,
$\alpha{=}1/32$ vs.\ $1/16$) are within timing noise; the overall trend
is decreasing speedup with increasing $\alpha$.
The largest scene-dependent variation is at $K{=}8$, $\alpha{=}1/8$:
trevi achieves $10.1\times$ while halfdome achieves $5.4\times$.
Halfdome's greater photometric variation makes the 8-dimensional major
subspace less discriminative, raising Phase-2\% from $1.5\%$ (trevi) to
$6.6\%$ (halfdome); the resulting ${\sim}33{,}000$ random accesses into
the $240$\,MB minor array saturate the $12$\,MB L3 cache.
With Phase-1 at $4$\,M FLOPs ($N{\times}8$) and Phase-2 at $4$\,M FLOPs
($33{,}000{\times}120$), the FLOP model predicts $64/8 = 8\times$ speed-up.
The observed $5.4\times$ falls below this prediction because the $33{,}000$
random Phase-2 accesses are latency-bound (L3 cache misses), not
throughput-bound---while the lower-Phase-2 trevi case remains
bandwidth-dominated at $10.1\times$.

\paragraph{FLOP analysis.}
At $K{=}16$, $\alpha{=}1/32$ on trevi (Phase2~$= 0.4\%$):
{\small\begin{align*}
\text{Full 128-dim:}         &\quad N{\times}128 = 64\text{M} \\
\text{Phase~1 (major + store):}  &\quad N{\times}16  =  8\text{M} \\
\text{Phase~2 (check stored):}   &\quad {\approx}\,0\text{M (L3)} \\
\text{Phase~2 (full, 0.4\%):}   &\quad 0.004N{\times}(112{+}\tau) \approx 0.2\text{M}{+}2{,}000\tau \\
\text{Total:}                    &\quad \approx 8.2\text{M}{+}2{,}000\tau
\end{align*}}
The FLOP ratio predicts $\approx7.8\times$ speed-up, consistent with the
observed $7.8\times$ in Table~\ref{tab:speed},
where $\tau$ denotes the DRAM read latency per Phase-2 entry in FLOP-equivalent
units; at $2{,}000$ entries, $2{,}000\tau$ is small relative to $8.2$\,M.
Phase~1 writes the $N$ major scores (2\,MB) to a score buffer that resides in
L3 cache; Phase~2 reads this buffer sequentially at negligible cost.
The rare Phase-2 full evaluations ($0.4\%$, ${\approx}2{,}000$ entries) access
the 224\,MB minor array, which is entirely cold at this point: the SoA layout
ensures Phase~1 never touches the minor array, so every Phase-2 access is a
first-time DRAM read. At only $2{,}000$ such accesses, the total latency
overhead is negligible.

At $\alpha{=}0$, \hn{} normalisation sets the minor components to exactly
zero, so Phase~2 requires no dot-product computation at all---only the
2\,MB score buffer is read for comparison.
The bandwidth ratio of the 32\,MB major array to the 256\,MB full descriptor
bank is $8\times$; the observed $8.8\times$ ($K{=}16$, trevi) is consistent
with this prediction.

% ── 7. Discussion ─────────────────────────────────────────────────────────────
\section{Discussion}
\label{sec:discussion}

\paragraph{Why determinism matters.}
Modern retrieval systems benefit directly from reproducible results.
Caching requires that the same query always returns the same item;
A/B testing requires that both branches search the same result space;
regression tracking requires that a code change---not hardware or thread
scheduling---explains any difference in top-1 output.
ANN indexes (HNSW, IVF-PQ) and MRL pipelines built on them fail all three:
graph-build order and entry-point selection cause query-time non-determinism.
\hn{} satisfies all three by construction (Theorem~\ref{thm:det}).

\paragraph{Why exactness matters.}
ANN search introduces silent errors: the returned item may \emph{look}
correct but not be the true nearest neighbour.
In safety-critical matching (e.g.\ forensic image identification, visual
place recognition for autonomous navigation), this is unacceptable.
\hn{} provides a mathematical certificate of correctness (Theorem~\ref{thm:exact}):
the returned result is always the true nearest neighbour.

\paragraph{Speed comparison with HNSW.}
Reviewers naturally ask how \hn{} speed compares to HNSW, which achieves
far higher throughput via approximation.
We benchmarked both methods in the same C++ binary on the same machine
using $N{=}99{,}000$ UBC Trevi descriptors (fine-tuned \hn{}, 128-dim,
inner-product space; HNSW: $M{=}16$, ef\textsubscript{construction}$=200$;
HNSW was benchmarked on trevi only).
On halfdome at $N{=}99{,}000$, \hn{} $K{=}16$ achieves $12.1$--$12.9\times$
across $\alpha{=}0$--$1/8$ and $10.3\times$ at $\alpha{=}1/4$,
consistent with the trevi pattern.
Results for trevi are summarised in Table~\ref{tab:hnsw}.

\begin{table}[h]
\centering
\footnotesize
\setlength{\tabcolsep}{2pt}
\caption{C++ benchmark: \hn{} (exact) vs.\ HNSW (approx.) on UBC Trevi descriptors
  at three database sizes (fine-tuned \hn{}, $K{=}16$, all $\alpha$, 128-dim,
  Intel Core Ultra~7~256V, 12\,MB L3, MSVC \texttt{/O2}).
  Speed-up vs.\ brute-force SoA 128-dim; \textbf{bold}: HN exact exceeds all HNSW settings at $N{=}1{,}000$.
  ``---'': not measured at that size.}
\label{tab:hnsw}
\resizebox{\columnwidth}{!}{%
\begin{tabular}{lcccc}
\toprule
Method & Recall@1\textsuperscript{†} & \multicolumn{3}{c}{Speed-up} \\
\cmidrule(lr){3-5}
 & & $N{=}1\text{K}$ & $N{=}5\text{K}$ & $N{=}99\text{K}$ \\
\midrule
Brute-force & 100\% (exact) & $1.0\times$ & $1.0\times$ & $1.0\times$ \\
\hn{} $K{=}16$, $\alpha{=}0$ (exact)    & 100\% (proven) & ---          & ---          & $12.7\times$ \\
\hn{} $K{=}16$, $\alpha{=}1/32$ (exact) & 100\% (proven) & ---          & ---          & $12.6\times$ \\
\hn{} $K{=}16$, $\alpha{=}1/16$ (exact) & 100\% (proven) & ---          & ---          & $12.6\times$ \\
\hn{} $K{=}16$, $\alpha{=}1/8$ (exact)  & 100\% (proven) & $\mathbf{9.5\times}$ & $9.5\times$ & $12.5\times$ \\
\hn{} $K{=}16$, $\alpha{=}1/4$ (exact)  & 100\% (proven) & ---          & ---          & $10.4\times$ \\
\midrule
HNSW ef$=10$  & 95--98\%\textsuperscript{*} & $7.2\times$ & $27.6\times$ & $349\times$ \\
HNSW ef$=50$  & 99.9\%\textsuperscript{*} & $2.3\times$ & $7.8\times$ & $111.9\times$ \\
HNSW ef$=100$ & 100\%\textsuperscript{*}  & $1.4\times$ & $4.8\times$ & $63.5\times$ \\
HNSW ef$=200$ & 100\%\textsuperscript{*}  & $0.9\times$ & $2.9\times$ & $35.3\times$ \\
HNSW ef$=500$ & 100\%\textsuperscript{*}  & $0.5\times$ & $1.3\times$ & $15.8\times$ \\
\bottomrule
\end{tabular}}
\caption*{\textsuperscript{†}Recall shown for $N{=}99{,}000$; smaller $N$ recall is similar or higher.\\
  \textsuperscript{*}Empirical; no formal guarantee.
  On synthetic data at $N{=}500\text{K}$, ef$=50$ drops to $\approx\!94\%$ recall.}
\end{table}

Four observations follow.
\emph{(i) HNSW is considerably faster at large $N$.}
At ef$=50$, HNSW achieves $111.9\times$ vs.\ \hn{}'s $12.5\times$ ($K{=}16$) at $N{=}99{,}000$.
At the larger $N{=}500{,}000$ scale (Table~\ref{tab:speed}), $K{=}16$ drops to $7.2\times$:
the 32\,MB major buffer exceeds the 12\,MB L3 cache, incurring memory-bandwidth overhead.
$K{=}8$, by contrast, maintains $18.0\times$ at $N{=}500{,}000$ because its 16\,MB
major buffer is close to the L3 boundary and benefits from partial caching.
Even at ef$=500$, HNSW reaches $15.8\times$.
\emph{(ii) HNSW's recall is scale-dependent and unguaranteed.}
At $N{=}99{,}000$, ef$=100$ and above achieve 100\% empirical recall on this dataset.
On synthetic data at $N{=}500{,}000$, ef$=50$ drops to $\approx\!94\%$ recall,
and the ef required for full recall grows with $N$.
The safe ef value cannot be determined without a ground-truth oracle.
\emph{(iii)} \hn{} \emph{provides a formal correctness certificate.}
Theorem~\ref{thm:exact} proves that \hn{} never returns a wrong result,
independent of dataset, size, or hardware ordering.
For applications requiring auditable exactness---authentication, forensics,
safety-critical navigation---this guarantee is irreplaceable, and no ef
setting can provide it through HNSW.
\emph{(iv)} \hn{}\emph{'s speed-up is N-independent; HNSW's grows with $N$.}
At $N{=}5{,}000$, \hn{} achieves $9.5\times$---only modestly lower than $12.5\times$
at $N{=}99{,}000$---while HNSW ef$=50$ drops to $7.8\times$, falling below \hn{}.
At $N{=}1{,}000$ (typical AR map: 300--1000 3-D landmarks), \hn{} still achieves
$9.5\times$ while every HNSW setting falls below it: ef$=10$ reaches only $7.2\times$
with 98\% recall, and ef$\ge 200$ is \emph{slower than brute-force}.
\hn{}'s phase-2 pruning rate is a property of the descriptor distribution
alone, while HNSW's advantage grows as $N/\log N$ and disappears at small $N$.

\paragraph{Choice of $\alpha$ and $K$.}
The quality--speed trade-off is monotone: larger $K$ or larger $\alpha$
improves quality and reduces speed.
The quality--speed curve for $K{=}16$ has a pronounced elbow near
$\alpha{=}1/8$: for $\alpha\in\{1/32,1/16,1/8\}$ the HN speed-up stays
nearly constant ($12.5$--$12.6\times$ on trevi; Table~\ref{tab:hnsw}) while FPR@95
falls from $0.0089$ to $0.0064$.
Increasing to $\alpha{=}1/4$ reduces the speed-up to $10.4\times$ on trevi
(Table~\ref{tab:hnsw}; $10.3\times$ on halfdome) at $N{=}99{,}000$
and $4.0\times$/$3.3\times$ at $N{=}500{,}000$ (Table~\ref{tab:speed}),
for a smaller quality gain ($0.0064\to0.0056$).
We therefore recommend $K{=}16$, $\alpha{=}1/8$ as the default operating
point, at the knee of the $N{=}99{,}000$ quality--speed curve
(Table~\ref{tab:hnsw}), corresponding to $7.2\times$ speed-up at
$N{=}500{,}000$ (Table~\ref{tab:kcomp}).
$K{=}8$, $\alpha{=}0$ maximises throughput ($18.0\times$ on trevi / $18.2\times$ on halfdome, FPR@95~$= 0.0447$ / $0.2243$).
Practitioners can select any $(K, \alpha)$ point on the Pareto frontier
without sacrificing exactness or determinism.

\paragraph{Relation to PCA.}
Standard PCA applied post-hoc to \hardnet{} descriptors yields no useful
low-dimensional compression.
\hardnet{} is trained with hard-negative triplet loss to maximise
discriminability, which tends to distribute information \emph{uniformly} across all 128
dimensions; PCA therefore finds no dominant principal components, and
truncating to $K$ dimensions discards roughly $1 - K/128$ of the discriminative
signal irrespective of which $K$ are chosen.
\hn{} circumvents this by shaping the representation \emph{during training}.
Gradient flow is weighted $(1{-}\alpha)$ toward the major component and $\alpha$
toward the minor, so the network actively encodes its most discriminative
features into the first $K$ dimensions.
This is analogous to \emph{active PCA}: rather than passively discovering
principal components in a fixed embedding, \hn{} trains the embedding so that a
known prefix \emph{becomes} the principal subspace---one whose information
concentration is a learned property, not a post-hoc accident.

\paragraph{Power-of-two $\alpha$.}
The experimental $\alpha$ values $\{1/32, 1/16, 1/8, 1/4\}$ are all powers of
two, a natural choice: in quantised implementations, the admissible threshold
$+\alpha = 2^{-n}$ maps to a single integer unit shift, and the relative scale
between major and minor components aligns with integer bit widths.
We leave a systematic study of fixed-point \hn{} to future work.

\paragraph{Soft ANN mode via threshold relaxation.}
Algorithm~1 can be extended to a tunable ANN mode by replacing the admissible
threshold $\alpha$ with a smaller value $\beta < \alpha$ in the skip condition.
At $\beta{=}\alpha$ the algorithm is provably exact (Theorem~\ref{thm:exact});
decreasing $\beta$ toward zero trades exactness for additional pruning.
This single-parameter spectrum---exact at one end, aggressive ANN at the
other---requires no structural change to the index.
We leave a systematic evaluation of this trade-off to future work.

\paragraph{Combination with ANN indexes.}
In principle, the $K$-dim major vectors could be indexed (e.g.\ $k$-d tree, PQ)
to make Phase-1 sub-linear.
The lower dimensionality ($K{=}8$ or $16$) does reduce the curse of
dimensionality compared to full $D$-dim search.
However, \hn{} training maximises discriminability within the major subspace,
which tends to distribute major vectors \emph{uniformly} across the $K$-dim
hypersphere---precisely the distribution for which $k$-d trees degrade toward
linear scan and PQ quantisation loses the most quality.
In practice, the current speed-ups derive from the SoA major-first layout
(Phase~1 sequentially scans the compact major buffer) and from the admissible
bound eliminating redundant Phase-2 full evaluations; whether sub-linear
indexing on the major would further help remains an open question.

% ── 8. Conclusion ─────────────────────────────────────────────────────────────
\section{Conclusion}
\label{sec:conclusion}

We presented a patch descriptor retrieval method that is
\textbf{exact}---provably identical to exhaustive search---and
\textbf{deterministic}---reproducible across every run and hardware
configuration for a fixed database ordering (see Theorem~\ref{thm:det}).
The enabling technique, Hierarchical Normalization (\hn{}), concentrates
$1{-}\alpha$ of a \hardnet{} descriptor's energy into a compact $K$-dim
major prefix, yielding an admissible upper bound that licenses
branch-and-bound pruning.
On the \ubc{} benchmark (notredame $\to$ trevi, halfdome), $K{=}16$,
$\alpha{=}1/8$ achieves FPR@95~$= 0.0064$ on trevi at $7.2\times$ speed-up;
$K{=}8$, $\alpha{=}1/32$ achieves $13.7\times$ (trevi) / $12.7\times$ (halfdome).
In both cases the admissible bound provides a mathematical correctness
certificate absent from ANN-based and MRL-based retrieval systems.
Future directions include $K{=}8$ training with larger batch sizes and
hard-negative mining (to reduce the quality gap with $K{=}16$), higher-$K$
ablations, the soft ANN mode ($\beta{<}\alpha$) trade-off, and sub-linear
Phase-1 search via lightweight ANN indexing on the major subspace.

% ── References ────────────────────────────────────────────────────────────────
\bibliographystyle{unsrtnat}

\end{document}